\documentclass{article}
\usepackage{amsmath,epsfig,bm, amssymb,subfigure,natbib,here,algorithm,algorithmic,multirow}
\newtheorem{defi}{Definition}
\newtheorem{prop}[defi]{Proposition}

\newcommand{\argmax}{\mathop{\mathrm{argmax\,}}}


\newcommand{\mathbbR}{\mathbb{R}}

\newcommand{\boldzero}{{\boldsymbol{0}}}
\newcommand{\boldone}{{\boldsymbol{1}}}

\newcommand{\boldA}{{\boldsymbol{A}}}

\newcommand{\boldF}{{\boldsymbol{F}}}
\newcommand{\boldG}{{\boldsymbol{G}}}

\newcommand{\boldI}{{\boldsymbol{I}}}

\newcommand{\boldK}{{\boldsymbol{K}}}
\newcommand{\boldL}{{\boldsymbol{L}}}

\newcommand{\boldQ}{{\boldsymbol{Q}}}

\newcommand{\boldX}{{\boldsymbol{X}}}

\newcommand{\boldc}{{\boldsymbol{c}}}

\newcommand{\boldu}{{\boldsymbol{u}}}

\newcommand{\boldx}{{\boldsymbol{x}}}
\newcommand{\boldy}{{\boldsymbol{y}}}

\newcommand{\boldalpha}{{\boldsymbol{\alpha}}}
\newcommand{\boldbeta}{{\boldsymbol{\beta}}}

\newcommand{\boldmu}{{\boldsymbol{\mu}}}

\newcommand{\boldpsi}{{\boldsymbol{\psi}}}

\newcommand{\boldGamma}{{\boldsymbol{\Gamma}}}

\newcommand{\boldSigma}{{\boldsymbol{\Sigma}}}

\newcommand{\calA}{{\mathcal{A}}}

\newcommand{\calI}{{\mathcal{I}}}

\newcommand{\calX}{{\mathcal{X}}}
\newcommand{\calY}{{\mathcal{Y}}}






\usepackage{graphicx,url}

\def\CUT#1{{}}

\advance\oddsidemargin-0.85in
\textheight8.5in
\textwidth6.5in

\date{\today}
\title{N$^3$LARS: Minimum Redundancy Maximum Relevance Feature Selection for Large and  High-dimensional Data} 

\author{Makoto Yamada, Avishek Saha, Hua Ouyang, Dawei Yin, and Yi Chang \\
Yahoo Labs, 701 1st Ave., Sunnyvale, 94089, USA\\
\texttt{\{makotoy,avishek2,houyang,daweiy,yichang\}@yahoo-inc.com}}

\begin{document}
\maketitle

\begin{abstract}
We propose a feature selection method that finds non-redundant features from a large and high-dimensional data  in nonlinear way. Specifically, we propose a nonlinear extension of the  \emph{non-negative least-angle regression} (LARS) called $\text{N}^3$LARS, where the similarity between input and output is measured through the normalized version of the Hilbert-Schmidt Independence Criterion (HSIC).  An advantage of $\text{N}^3$LARS is that it can easily incorporate with map-reduce frameworks such as Hadoop and Spark. Thus, with the help of distributed computing, a set of features can be efficiently selected from a large and high-dimensional data.  Moreover, $\text{N}^3$LARS is a convex method and can find a global optimum solution. The effectiveness of the proposed method is first demonstrated through feature selection experiments for classification and regression with small and high-dimensional datasets. Finally, we evaluate our proposed method over a large and high-dimensional biology dataset.
\end{abstract}

\section{Introduction}

Feature selection is an important machine learning problem, and it is widely used for various types of applications such as gene selection from microarray data \citep{huang2010variable}, document categorization \citep{CIKM:Forman:2008}, and prosthesis control \citep{IEEEBIO:Shenoy+etal:2008}, to name a few. The feature selection problem is a traditional and popular machine learning problem, and thus there exist many methods including the \emph{least absolute shrinkage and selection operator} (Lasso) \citep{JRSSB:Tibshirani:1996} and the spectral feature selection (SPEC) \citep{zhao2011spectral}. 

Feature selection methods can be mainly categorized into two classes: \emph{Wrapper}  and \emph{Filter}  methods \citep{JMLR:Guyon+etal:2003,brown2012conditional}. For wrapper methods, it finds a set of features so that the classification/regression performance becomes higher. In contrast, filter methods select features which are independent of the choice of successive classifier or regressor. Namely, it would be more appropriate for interpreting the selected features than wrapper type methods. 

 Recently, a \emph{wrapper} based large-scale feature selection method called the \emph{feature generation machine} (FGM) was proposed \citep{JMLR:v15:tan14a}. However, to the best of our knowledge,  there is a few \emph{filter} based methods for large and high-dimensional setting, in particular for nonlinear and dense setting. Moreover, existing methods are \emph{maximum relevance} MR-based methods which selects $m$ features with the largest relevance to the output \citep{PAMI:Peng+etal:2005}. MR-based methods are simple yet efficient and can be easily applicable to high-dimensional and large sample problems. However, since MR-based approaches only use input-output relevance and not use input-input relevance, they tend to select redundant features. Thus, in this paper, we focus on a filter type feature selection approaches and propose a method to find a non-redundant feature set from  a large $n$ and  high-dimensional $d$ in nonlinear manner.

 To deal with a large and high-dimensional problem, we propose a \emph{Nonlinear} extension of the \emph{Non-Negative least angle regression} (NN-LARS) \citep{efron2004least} called $\text{N}^3$LARS. More specifically, we reformulate the Hilbert-Schmidt Independence Criterion Lasso (HSIC Lasso) optimization problem~\citep{yamada2014high,hetransductive} with NN-LARS~\citep{efron2004least},  where the similarity between input and output is measured through a normalized variant of HSIC~\citep{DBLP:journals/jmlr/CortesMR12}. 
There are a couple of advantages in $\text{N}^3$LARS over HSIC Lasso in a large and high-dimensional  setting. First, $\text{N}^3$LARS finds a feature one by one and get $m$ features with only $m$ steps, while HSIC Lasso needs to select a regularization parameter for obtaining $m$ features and this may require several runs of HSIC Lasso. Second, since it is possible to draw regularization path in $\text{N}^3$LARS, it may be helpful for feature analysis (such as in biology). Finally, the proposed method can be easy implemented in distributed computation frameworks with Hadoop. This is very helpful property for practitioners.  Through benchmark feature selection datasets, we first show that the proposed approach compares favorably with existing filter type feature selection methods. Then, we also evaluate $\text{N}^3$LARS over a large and high-dimensional biology dataset. 

\noindent {\bf Contribution:}
Our main contribution of this paper is to propose a \emph{filter} based nonlinear method to select features from large and high-dimensional datasets. In particular, we focus on selecting non-redundant features from dense data in non-linear manner. The HSIC Lasso only considered small-$n$ and high-$d$ scenarios, and it does not scale well to large-$n$ and high-$d$ data. In contrast, N3LARS "scales easily" for large-$n$ problems based on a combination of Nystr\"{o}m’s method and map-reduce. To the best of our knowledge, \emph{filter} based nonlinear feature selection from large-$n$ and high-$d$ data is not well studied in literature and we propose the first scalable algorithm in this realm. Finally, we establish the relationship between the proposed method and the theoretically well justified feature screening method \citep{balasubramanian2013ultrahigh}.  


\section{Background}
In this section, we first formulate the supervised feature selection problem. Then, we review Lasso based feature selection methods and point out their limitations.

Let $\boldX = [\boldx_1, \ldots, \boldx_n] = [\boldu_1, \ldots, \boldu_d]^\top \in \mathbbR^{d \times n}$ denotes the input data, 
$\boldy = [y_1, \ldots, y_n]^\top \in \mathbbR^{n}$ denotes the output data or labels
and suppose that we are given $n$ independent and identically distributed (i.i.d.) paired samples $\{(\boldx_i, y_i)~|~\boldx_i \in \calX,~~y_i \in \calY,~i=1,\ldots,n\}$ drawn from a joint distribution with density $p(\boldx, y)$. Note, $\calY$ could be either continuous (i.e., regression) or categorical (i.e., classification). 
For the purposes of this paper consider $\boldx$ to be a dense vector.  

The goal of supervised feature selection is to find $m$ features ($m < d$) of input vector $\boldx$ that are responsible for predicting output $y$.

{\bf Lasso} (Least Absolute Shrinkage and Selection Operator)~\citep{JRSSB:Tibshirani:1996} is a computationally efficient linear feature selection method that optimizes the cost function:
\[
\min_{\boldalpha \in \mathbbR^d} \hspace{0.3cm} \frac{1}{2}\|\boldy - \boldX^\top \boldalpha \|^2_2 + \lambda \|\boldalpha\|_1,
\]
where $\boldalpha = [\alpha_1, \ldots, \alpha_d]^\top$ is a coefficient vector, $\alpha_k$ is the regression coefficient of the $k$-th feature, $\|\cdot\|_1$ and $\|\cdot\|_2$ are the $\ell_1$- and $\ell_2$-norms, and $\lambda > 0$ is the regularization parameter. 
The $\ell_1$-regularizer in Lasso tends to give a sparse solution and the regression coefficients for irrelevant features become zero, and Lasso is useful especially when the number of features is larger than the number of training samples \citep{JRSSB:Tibshirani:1996}.  However, Lasso can only capture linear dependency, and this is a critical limitation. 

{\bf Instance-wise Non-linear Lasso} was introduced~\citep{IEEENN:Roth:2005} to deal with non-linearity in Lasso where the original instance $\boldx$ is transformed by a non-linear function $\boldpsi(\cdot): \mathbbR^d \rightarrow \mathbbR^{d'}$.
The corresponding optimization problem is given as
\[
\min_{\boldbeta \in \mathbbR^n} \hspace{0.3cm} \frac{1}{2}\| \boldy  - \boldA \boldbeta \|^2_2 + \lambda \|\boldbeta\|_1,
\]
where $A_{i,j} = \boldpsi(\boldx_i)^\top \boldpsi(\boldx_j) = A(\boldx_i, \boldx_j)$, $\boldbeta = [\beta_1, \ldots, \beta_n]^\top$ is a regression coefficient vector, and $\beta_j$ is a coefficient of the $j$-th basis $A(\boldx,\boldx_j)$.
The instance-wise non-linear Lasso can capture non-linearity and gives a sparse solution in terms of instances, but it cannot be used for feature selection.


{\bf HSIC Lasso}~\citep{yamada2014high} can select features from high-dimensional data in a nonlinear manner by optimizing the cost function: 
\begin{align}
\label{eq:hsic-form}
\min_{\boldalpha \in \mathbbR^d}  & \hspace{0.3cm}
\|\bar{\boldL} - \sum_{k = 1}^{d} \alpha_k \bar{\boldK}^{(k)} \|^2_{\textnormal{Frob}}  +  \lambda \|\boldalpha\|_1 \nonumber, \\
&\textnormal{s.t.}  \hspace{0.3cm} \alpha_1,\ldots,\alpha_d \geq 0,
\end{align}
where $\bar{\boldK}^{(k)} = \boldGamma \boldK^{(k)} \boldGamma$ and $\bar{\boldL} = \boldGamma \boldL \boldGamma$ are centered and normalized Gram matrices, $\boldK_{i,j}^{(k)} = K(x_{k,i}, x_{k,j})$ and $\boldL_{i,j} = L(y_{i}, y_{j})$ are Gram matrices, $K(x,x')$ and $L(y,y')$ are kernel functions, $\boldGamma = \boldI_{n} - \frac{1}{n}\boldone_n \boldone_n^\top$ is the centering matrix, $\boldI_n$ is the $n$-dimensional identity matrix, $\boldone_n$ is the $n$-dimensional vector with all ones, and $\|\cdot\|_{\textnormal{Frob}}$ is the Frobenius norm.
The objective function of HSIC Lasso can also be rewritten as~\citep{yamada2014high}:
\begin{align}
\label{eq:HSIC Lasso-HSIC-relation-ext}
C - 2\sum_{k = 1}^d\alpha_k {\textnormal{HSIC}}(\boldu_k,\boldy) \nonumber + \sum_{k,l = 1}^d \alpha_k \alpha_l {\textnormal{HSIC}}(\boldu_k,\boldu_l),
\end{align}
where $C = {\textnormal{HSIC}}(\boldy,\boldy) = \text{tr}(\bar{\boldL}\bar{\boldL})$ and ${\textnormal{HSIC}}(\boldu_k,\boldu_{k'}) = \text{tr}(\bar{\boldK}^{(k)}\bar{\boldK}^{(k')})$ are kernel-based independence measure called as HSIC \citep{ALT:Gretton+etal:2005}, and $\boldX = [\boldu_1, \ldots, \boldu_d]^\top$. Note that HSIC always takes a non-negative value, and is zero if and only if two random variables are statistically independent when a \emph{universal reproducing kernel} \citep{JMLR:Steinwart:2001} such as the Gaussian kernel is used.
 
If the $k$-th feature $\boldu_k$ has high dependence on output $\boldy$,
${\textnormal{HSIC}}(\boldu_k,\boldy)$ becomes a large value
and thus $\alpha_k$ should also be large. On the other hand, if $\boldu_k$ and $\boldy$ are independent,
${\textnormal{HSIC}}(\boldu_k,\boldy)$ is close to zero; $\boldu_k$ tends to be not selected by the $\ell_1$-regularizer. Furthermore, if $\boldu_k$ and $\boldu_l$ are strongly dependent (i.e., redundant features), ${\textnormal{HSIC}}(\boldu_k,\boldu_l)$ is large 
and thus either of $\alpha_k$ and $\alpha_l$ tends to be zero. That is, non-redundant features that have strong dependence on output $\boldy$ tend to be selected by the HSIC Lasso. 

HSIC Lasso outperforms existing feature selection methods in small and high-dimensional setting \citep{yamada2014high}.  However, since HSIC Lasso needs $O(dn^2)$ memory space, it is not suited for large $d$ and large $n$ setting.  In \citep{yamada2014high}, a table lookup based approach was proposed to reduce memory usage but the computational cost was still large. Moreover, HSIC Lasso needs to tune  the regularization parameter $\lambda$ which is usually difficult to set manually to obtain $m$ features. Finally, statistical properties of HSIC Lasso are not well studied.


\section{Proposed Method}

In this paper, to efficiently solve a large and high-dimensional feature selection problem, we first propose \emph{Nonlinear} extension of the \emph{Non-Negative} \emph{least angle regression} (NN-LARS)  \citep{efron2004least} called $\text{N}^3$LARS. Then, we propose a distributed computing approach to scale up the proposed algorithm for a large and high-dimensional problem. Moreover, to justify the statistical property of $\text{N}^3$LARS, we establish the relationship between the proposed method and the feature screening method in \citep{balasubramanian2013ultrahigh}.

\subsection{Nonlinear Non-Negative LARS ($\text{N}^3$LARS)} 
It is well known that non-negative Lasso (a.k.a, positive Lasso) can be efficiently solved by non-negative LARS (NN-LARS)\citep{efron2004least,morup2008approximate}. That is, the HSIC Lasso problem can be efficiently solved via NN-LARS. Note that NN-LARS requires the inputs $\boldK^{(1)}, \ldots,\boldK^{(d)}$ be standardized to have mean 0 and unit length and the output $\boldL$ have mean 0. Thus, in our formulation, we normalized each input and output as $\widetilde{\boldK} = \bar{\boldK}/\|\bar{\boldK}\|_{\text{Fro}}$, $\widetilde{\boldL} = \bar{\boldL}/\|\bar{\boldL}\|_{\text{Fro}}$ such that $\boldone_n^\top \widetilde{\boldK}\boldone_n = 0$, $\boldone_n^\top \widetilde{\boldL}\boldone_n = 0$, and $\|\widetilde{\boldK}\|_{\text{Fro}}^2 = 1$.  With these modifications the optimization problem of the proposed method  can be written as
\begin{align}
\min_{\boldalpha \in \mathbbR^d}~
\|\widetilde{\boldL}\! -\! \sum_{k = 1}^{d} \alpha_k \widetilde{\boldK}^{(k)} \|^2_{\textnormal{Frob}} \! + \! \lambda \|\boldalpha\|_1, ~\textnormal{s.t.}~\alpha_1,\ldots,\alpha_d \geq 0.
\end{align}
We call this method Nonlinear Non-Negative LARS ($\text{N}^3$LARS).  
It is worth pointing out the objective function of $\text{N}^3$LARS can be reframed as
\begin{align}
C - 2\sum_{k = 1}^d\alpha_k {\textnormal{NHSIC}}(\boldu_k,\boldy) \nonumber + \sum_{k,l = 1}^d \alpha_k \alpha_l {\textnormal{NHSIC}}(\boldu_k,\boldu_l),
\end{align}
where $\text{NHSIC}(\boldu, \boldy) = \text{tr}(\widetilde{\boldK} \widetilde{\boldL})$ is the normalized version of HSIC (a.k.a, the centered kernel target alignment \citep{DBLP:journals/jmlr/CortesMR12}) and $C = {\textnormal{NHSIC}}(\boldy,\boldy)$. Since the NHSIC score is zero if and only if two random variables are independent (See the proof in Proposition 1) and have similar form of HSIC Lasso, we can obtain non-redundant features.

Let $\calA$ be the indices of the active set and $\calI$ the indices of the inactive set. The $\text{N}^3$LARS algorithm is summarized in Algorithm 1, where $\boldalpha_{\calA} \in \mathbbR^{|\calA|}$, $\boldone \in \mathbbR^{|\calA|}$ are vectors with all ones, $[ \boldQ_{\calA} ]_{i,j} = \text{NHSIC}(\boldu_{\calA,i}, \boldu_{\calA,j})$, and $\boldu_{\calA,i}$ is a feature vector selected at $i$-th step. 

\begin{algorithm}[t]
	\caption{$\text{N}^3$LARS}
	\begin{algorithmic}
		\STATE {\bf Initialize}: $\boldalpha = \boldzero$ and $\calA = []$.
		\WHILE{$|\calA| < m$}
		\STATE /* Select $m$ features */
		\FOR{$k = 1 \ldots d$}
		  \STATE /* compute negative gradient */
		  \STATE $\widehat{c}_k = \text{NHSIC}(\boldu_k, \boldy) - \sum_{i = 1}^d \alpha_i \text{NHSIC}(\boldu_k, \boldu_i)$
		\ENDFOR
		\STATE {\bf Find feature index}: $j = \argmax_{\boldc_{\calI}} \widehat{c}_k > 0$.
		\STATE {\bf Update sets}: $\calA = [\calA~j], \calI = \calI \backslash j$
		\STATE {\bf Update coefficients}:
			\begin{align*}
			\boldalpha_{\calA} &= \boldalpha_{\calA} + \widehat{\mu} \boldQ_{\calA}^{-1} \boldone, \\
			\widehat{\mu} &= \min_{\mu} \left\{ \begin{array}{ll}
			\exists \ell \in I: \widetilde{\boldc}_\ell = \widetilde{\boldc}_{\calA}& \\
			\widetilde{\boldc}_{\calA} = \boldzero& \\
			\end{array} \right.,
			\end{align*}
               \ENDWHILE
	\end{algorithmic}
\end{algorithm}

An advantage of the LARS based formulation over Lasso is that LARS can find $m$ features by iterating over $m$ steps while Lasso requires fine tuning the regularization parameter $\lambda$ to obtain $m$ features. For high-dimensional and small size problems, it is reasonable to run HSIC Lasso several times to obtain $m$ features by changing $\lambda$. However, a large and high-dimensional problems, tuning the regularization parameter is very expensive. In addition, since the proposed method can be regarded as an NN-LARS algorithm \citep{efron2004least,morup2008approximate},  we can obtain an entire regularization path for the cost of an ordinary least squares method. 

If we solve Eq.(2) without the $\ell_1$ regularizer, the time complexity is $O(d^2 n^3)$ ($O(n^3)$ for NHSIC), and this is infeasible when both $d$ and $n$ are large (this is the complexity of QPFS with HSIC \citep{DBLP:journals/jmlr/CortesMR12}). However, if we add the $\ell_1$ regularizer and solve with $\text{N}^3$Lars, we do not need to compute all NHSIC values and the complexity becomes $O(mdn^3)$ ($m \ll d$ is the number of selected features). This is much better than $O(d^2n^3)$. However, the computational cost of $\text{N}^3$LARS grows rapidlly when both $d$ and $n$ increase (see Figure \ref{fig:illustrative_example}(b)).  To deal with this computational issues, we first introduce the Nystr\"{o}m approximation to reduce the computational cost of NHSIC \citep{book:Schoelkopf+Smola:2002}. We then use map-reduce to take advantage of parallelism and further speed up $\text{N}^3$LARS. 

\subsection{Nystr\"{o}m Approximation for NHSIC}
We start  introducing kernels used in this paper.

\vspace{0.1in}
\noindent {\bf Kernel types:}
A universal kernel such as the Gaussian kernel  allows HSIC to detect dependence between two random variables \citep{ALT:Gretton+etal:2005}. Moreover, it has been shown that the delta kernel is useful for multi-class classification problems \citep{song2012feature}. Thus, we use the Gaussian kernel for inputs $\boldx$. For output kernels, we use the Gaussian kernel for regression cases and the delta kernel for classification problems.

For input $x$, we first normalize the input $x$ to have unit standard deviation, and then use the Gaussian kernel, $K(x,x') = \exp \left(-\frac{( x - x' )^2}{2\sigma_{\mathrm x}^2} \right)$, where $\sigma_\mathrm{x}$ is the Gaussian kernel width. 

In regression cases (i.e., $y\in\mathbbR$), we normalize an output $y$ to have unit standard deviation, and then use the Gaussian kernel,
$L(y,y') = \exp \left(-\frac{( y - y' )^2}{2\sigma_{\mathrm y}^2} \right)$, where  $\sigma_\mathrm{y}$ is the Gaussian kernel width.  In this paper, we use $\sigma_{\mathrm{x}}^2 =1$ and $\sigma_{\mathrm{y}}^2 = 1$.

In classification cases (i.e., $y$ is categorical), we use the delta kernel for $y$, 
\begin{eqnarray*}
L(y,y') = \left\{ \begin{array}{ll}
{1}/{n_{y}} & \textnormal{if}~y = y', \\
0 & \textnormal{otherwise}, \\
\end{array} \right.
\end{eqnarray*} 
 where $n_{y}$ is the number of samples in class $y$. 

\vspace{0.1in}
\noindent {\bf Nystr\"{o}m approximation:} To reduce the computational cost of Gram matrices, we use the Nystr\"{o}m approximation for NHSIC as
\begin{align}
\text{NHSIC}(\boldu,\boldy) = \text{tr}((\boldF^\top \boldG)^2),
\end{align}
where $\boldF = \boldGamma \boldK_{nb}\boldK_{bb}^{-1/2}/(\text{tr}((\boldK_{nb}\boldK_{bb}^{-1/2})^2))^{1/4}$, $\boldK_{nb} \in \mathbbR^{n \times b}$, $\boldK_{bb} \in \mathbbR^{b \times b}$,  $\boldG = \boldGamma \boldL_{nb}\boldL_{bb}^{-1/2}/(\text{tr}((\boldL_{nb}\boldL_{bb}^{-1/2})^2))^{1/4}$, $\boldL_{nb} \in \mathbbR^{n \times b}$, $\boldL_{bb} \in \mathbbR^{b \times b}$, and $b$ is the number of basis function.  In this paper, we use $\boldu_b = [-5, -4.47, \ldots, 4.47, 5.0]^\top \in \mathbbR^{20}$, where $b = 20 \ll n$.


For the output matrix $\boldG$ in regression, we can similarly use the above technique to approximate the Gram matrix. For classification, we can simply compute $\boldG$ as
\begin{align*}
\boldG_{k,j} &= \left\{ \begin{array}{ll}
\frac{1}{\sqrt{n_k}}& (k = y_{j}) \\
0& (\text{otherwise})\\
\end{array} \right., 
\end{align*}
where $\boldG \in \mathbbR^{C \times n}$. The computational complexity of NHSIC for regression problem is $O(bn^2)$, and the entire computational complexity of $\text{N}^3$LARS becomes $O(mdbn^2)$. In particular for large $n$, the Nyst\"{o}m approximation is very helpful.

\subsection{Distributed computation with Map-Reduce}

The Nystr\"{o}m approximation is useful for large $n$. However, for high-dimensional cases (i.e., $d$ is also very large), the computation cost of $\text{N}^3$LARS is still large. To deal with this,  we separately compute $c_k$ in $\text{N}^3$LARS in parallel with distributed computing such as Hadoop. Here, we show pseudo code for Hadoop streaming.

\begin{description}
\item[{Preparation:} ]Compute $\boldG$ and store it to ``output'' file.
\item[{\bf Step1:}] For each feature vector $\boldu_k$, we compute $\boldF_k$, $\text{NHSIC}(\boldu_k, \boldy)$, and output a key-value pair $<k, \boldF_k \in \mathbbR^{n \times b}>$. 
\item[{\bf Step2:}] With given $<j = \argmax(\boldc_I), \boldF_j>$, we compute $\text{NHSIC}(\boldu_k, \boldu_j)$ and output a key-value pair $<k, \text{NHSIC}(\boldu_k, \boldu_j)>$. 
\item[{\bf Step3:}] With given $<k, \text{NHSIC}(\boldu_k, \boldu_j)>$ and $<-1, \text{NHSIC}(\boldu_k, \boldy)>$, we update $\boldalpha$.
\item[{\bf Step4:}]With given $<k, \text{NHSIC}(\boldu_k, \boldu_j)>$, $<-1, \text{NHSIC}(\boldu_k, \boldy)>$, and $\boldalpha^{\text{new}}$, we compute $\boldc$.
\end{description}

We repeat Step 2 to 4 until obtaining $m$ features. In this paper, we use Python and Hadoop streaming to implement the proposed method. With using map-reduce, the complexity is further reduced to $O(mdbn^2/P)$, where $P$ is the number of mappers.  To further speed up, we could use a faster distributed computation framework such as SPARK and JAVA. 

\subsection{Relation to High-dimensional feature screening method}
In this section we establish a relation between the proposed method and the feature screening method \citep{balasubramanian2013ultrahigh}. 

The high-dimensional feature screening method is a general framework for model-free feature screening. The idea of this method is to rank the covariates between the input random variables $\boldu$ and the output response $\boldy$ according to some degree of dependence. For example, one can choose $\text{NHSIC}$ as such a measure and rank the $d$ features $\boldu_1,\ldots,\boldu_d$ according to their values $\text{NHSIC}(\boldu_k,\boldy)$. The top $m$ features are then regarded as relevant.

The relation between this method and our proposed method can be expressed as follows.
\begin{prop}
If any pair of features $\boldu_k$ and $\boldu_{k'}$ are assumed to be independent, then there exist a pair $(\lambda, m)$ such that the top $m$ features obtained by \citep{balasubramanian2013ultrahigh} is the same of that obtained by solving Eq.~(2).
\end{prop}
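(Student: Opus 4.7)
The plan is to use the pairwise independence assumption to decouple the quadratic objective in Eq.~(2) into $d$ independent scalar problems, each of which admits a closed-form soft-thresholded minimizer whose support is governed by $\textnormal{NHSIC}(\boldu_k,\boldy)$. Starting from the HSIC-form of the N$^3$LARS objective,
\begin{align*}
J(\boldalpha) = C - 2\sum_{k=1}^{d}\alpha_k \textnormal{NHSIC}(\boldu_k,\boldy) + \sum_{k,l=1}^{d}\alpha_k \alpha_l \textnormal{NHSIC}(\boldu_k,\boldu_l) + \lambda \sum_{k=1}^{d}\alpha_k,
\end{align*}
where the $\ell_1$ term collapses because $\alpha_k \geq 0$, I would invoke the universality of the Gaussian kernel together with the HSIC characterization of independence to conclude $\textnormal{NHSIC}(\boldu_k,\boldu_{k'}) = 0$ for $k\neq k'$. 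Combined with the normalization $\|\widetilde{\boldK}^{(k)}\|_{\textnormal{Frob}}=1$, which gives $\textnormal{NHSIC}(\boldu_k,\boldu_k)=1$, the cross-product matrix reduces to the identity and $J$ decouples as a sum of single-variable quadratics.

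Second, for each $k$ I would solve the resulting scalar KKT system
\begin{align*}
\min_{\alpha_k \geq 0}\; \alpha_k^2 - 2\alpha_k \textnormal{NHSIC}(\boldu_k,\boldy) + \lambda \alpha_k,
\end{align*}
whose closed-form minimizer is $\alpha_k^\ast = \max\bigl(0,\, \textnormal{NHSIC}(\boldu_k,\boldy) - \lambda/2\bigr)$; non-negativity of NHSIC ensures this soft-thresholding formula is well defined. Consequently the active set $\{k : \alpha_k^\ast > 0\}$ coincides with $\{k : \textnormal{NHSIC}(\boldu_k,\boldy) > \lambda/2\}$, and within that active set the ranking by $\alpha_k^\ast$ matches the ranking by $\textnormal{NHSIC}(\boldu_k,\boldy)$.

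Third, to reproduce the screening rule of \citep{balasubramanian2013ultrahigh} I would sort the relevance scores $\textnormal{NHSIC}(\boldu_{(1)},\boldy) \geq \cdots \geq \textnormal{NHSIC}(\boldu_{(d)},\boldy)$ and choose any $\lambda$ with $\lambda/2 \in \bigl[\textnormal{NHSIC}(\boldu_{(m+1)},\boldy),\, \textnormal{NHSIC}(\boldu_{(m)},\boldy)\bigr)$. For this $\lambda$ exactly the indices of the top $m$ features receive a positive coefficient, which is precisely the output of the screening procedure and completes the argument.

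The main obstacle is the very first step: converting the population-level pairwise independence assumption into the \emph{exact} identity $\textnormal{NHSIC}(\boldu_k,\boldu_{k'})=0$ requires the universal-kernel characterization of independence and is therefore most naturally read at the population level, since in finite samples the off-diagonal Gram-matrix interactions are only $o(1)$. Once this identity is granted, the remainder of the proof is a routine KKT computation together with a threshold-matching argument that pairs $\lambda$ with $m$.
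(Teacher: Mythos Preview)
Your argument is correct, and it shares the paper's key step: using the universal-kernel characterization of HSIC to conclude that pairwise independence forces $\textnormal{NHSIC}(\boldu_k,\boldu_{k'})=0$ for $k\neq k'$ and $\textnormal{NHSIC}(\boldu_k,\boldu_k)=1$, thereby reducing the quadratic term to $\|\boldalpha\|_2^2$. Where you diverge is in how you extract the ranking conclusion. The paper stops after writing the decoupled objective and then argues by a swap/contradiction: if $\textnormal{NHSIC}(\boldu_i,\boldy)>\textnormal{NHSIC}(\boldu_j,\boldy)$ but $\alpha_i<\alpha_j$, exchanging $\alpha_i$ and $\alpha_j$ strictly increases the objective, so the ordering of the optimal $\alpha_k$'s must coincide with the ordering of the NHSIC scores. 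You instead solve the decoupled problem in closed form via soft-thresholding, $\alpha_k^\ast=\max(0,\textnormal{NHSIC}(\boldu_k,\boldy)-\lambda/2)$, and then explicitly exhibit a $\lambda$ whose level set picks out exactly the top $m$ scores. Your route is slightly longer but more constructive: it actually produces the pair $(\lambda,m)$ claimed in the statement, whereas the paper's swap argument establishes only that the rankings agree and leaves the existence of a matching $(\lambda,m)$ implicit. Your caveat about the population-versus-empirical reading of $\textnormal{NHSIC}=0$ is apt and applies equally to the paper's proof; neither argument controls the finite-sample off-diagonal terms.
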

\begin{proof}
Since the two features $\boldu_k$ and $\boldu_{k'}$ are assumed to be independent, according to the result of independent tests for HSIC (e.g. Theorem 4 in \citep{ALT:Gretton+etal:2005}), $\text{HSIC}(\boldu_k, \boldu_{k'})=0$. Using the definition of NHSIC: $\text{NHSIC}(\boldu_k, \boldu_{k'}) = \frac{\text{HSIC}(\boldu_k, \boldu_{k'})}{\sqrt{\text{tr}(\bar{\boldK}^{(k)} \bar{\boldK}^{(k)} )} \sqrt{\text{tr}(\bar{\boldK}^{(k')}  \bar{\boldK}^{(k')})}} = 0$ if $k \neq k'$ and $\text{NHSIC}(\boldu_k, \boldu_k') = 1$ if $k = k'$.  Hence solving the objective function Eq.~(2) is equivalent to:
\begin{align}\label{eq:prop}
\begin{split}
\max_{\boldalpha \in \mathbbR^{d}} &\hspace{0.3cm} \sum_{k=1}^d \alpha_k \text{NHSIC}(\boldu_k, \boldy) - \frac{1}{2}\|\boldalpha\|_2^2 - \frac{\lambda}{2} \|\boldalpha\|_1,\\
\text{s.t.} & \hspace{0.3cm} \alpha_1,\ldots,\alpha_d \geq 0.
\end{split}
\end{align} 
Next we prove that the $\alpha$ associated with the largest NHSIC values are the same as the largest $\alpha_k$ in the solution of \ref{eq:prop}. We prove it by contradiction. Suppose there exists a pair $(i,j)$ such that $\text{NHSIC}(\boldu_i,\boldy) > \text{NHSIC}(\boldu_j,\boldy)$ and $\alpha_i < \alpha_j$. Then one can simply switch the values of $\alpha_i$ and $\alpha_j$ and obtains a higher value in the objective function \ref{eq:prop}. This contradiction states that the largest $\alpha_k$ correspond to the largest values $\text{NHSIC}(\boldu_k,\boldy)$.
\end{proof}
The above proposition shows that the connection to feature screening method \citep{balasubramanian2013ultrahigh}.

Note, in \citep{balasubramanian2013ultrahigh}, since the feature screening method tends to select redundant features, an iterative screening approach is used to filter out redundant features. Specifically, they first screen $m$ features from entire $d$ features and then select $m'$ features out of $m$ features by an iterative screening method. This technique can also be used for $\text{N}^3$LARS.

\section{Related Work}
\label{sec:existing}

In this section, we review existing feature selection methods and discuss their relation to the proposed approach.

{\bf Maximum Relevance (MR)} feature selection is a popular approach that selects $m$ features with the largest relevance to the output \citep{PAMI:Peng+etal:2005}. The feature screening method \citep{balasubramanian2013ultrahigh} is also an MR-method. Usually, the mutual information and a kernel-based independence measures such as HSIC are used as the relevance score \citep{ALT:Gretton+etal:2005}. MR-based methods are simple yet efficient and can be easily applicable to high-dimensional and large sample problems. However, since MR-based approaches only use input-output relevance and not use input-input relevance, they tend to select redundant features (i.e., the selected features can be very similar to each other). As a result it may not help in improving overall classification/regression accuracy and interpretability. 

{\bf Minimum Redundancy Maximum Relevance (mRMR)}~\citep{PAMI:Peng+etal:2005} was proposed to deal with the feature redundancy problem; it selects features that have high relevance with respect to an output and are non-redundant. It has been experimentally shown that mRMR outperforms MR feature selection methods~\citep{PAMI:Peng+etal:2005}. Moreover, there exists an off-the-shelf C++ implementation of mRMR, and it can be applicable to a large and high dimensional feature selection. {\bf Fast Correlation based filter (FCBF)} can also be regarded as an mRMR method, in which it uses symmetrical uncertainty to calculate
dependences of features and finds best subset using backward selection with sequential search strategy \citep{yu2003feature}. Note that, it has also been reported that FCBF compares favorably with mRMR \citep{senliol2008fast}. However, both mRMR and FCBF use greedy search strategies such as forward selection/backward elimination and tends to produce locally optimal feature set. Another potential weakness of mRMR is that mutual information is approximated by Parzen window estimation which is not accurate in particular when the number of training samples is small~\citep{BMCBio:Suzuki+etal:2009a}.

To obtain a globally optimal feature set, a convex relaxed version of mRMR called the {\bf Quadratic Programming Feature Selection (QPFS)} was proposed in~\citep{JMLR:Rodriguez+etal:2010}. An advantage of QPFS over mRMR is that it can find a globally optimal solution by just solving a QP problem. The authors showed that QPFS compares favorably with mRMR for large sample size but low-dimensional cases (e.g., $d < 10^3$ and $n > 10^4$). However, QPFS tends to be computationally expensive for large and high-dimensional  cases, since QPFS needs to compute $d(d -1)/2$ mutual information scores. To deal with the computational problem in QPFS, a Nystr\"{o}m approximation based approach was proposed~\citep{JMLR:Rodriguez+etal:2010}, and it has been experimentally shown that QPFS with Nystr\"{o}m approximation compares favorably with mRMR both in accuracy and time.  However, for large and high-dimensional settings, computational cost for mutual information is still very high. Also, similar to mRMR, the mutual information approximation may not be so accurate. 

Forward/Backward elimination based feature selection with {\bf HSIC} (FOHSIC/BAHSIC) is also a popular feature selection method \citep{song2012feature}.   An advantage of HSIC-based feature selection over mRMR is that the HSIC score can be accurately estimated. Moreover, HSIC can be implemented very easily. However, similar to mRMR, it selects features using greedy search algorithm and tends to have a locally optimal feature set. To obtain a better feature set, {\bf HSFS} was proposed~\citep{DBLP:conf/icml/MasaeliFD10} as a continuously relaxed version of FOHSIC/BAHSIC that could be solved by limited-memory BFGS (L-BFGS)~\citep{book:Nocedal:2003}.  However, HSFS is a non-convex method and restarting from many different initial points would be necessary to select good features which is computationally expensive. 

For small and high-dimensional feature selection problems (e.g., $n < 100$ and $d> 10^4$), $\ell_1$ regularized based approaches such as Lasso are useful \citep{JRSSB:Tibshirani:1996,zhao2010efficient}. In addition, Lasso is known to scale well with both number of samples as well as dimensionality \citep{JRSSB:Tibshirani:1996,zhao2010efficient}. However, Lasso can only capture linear dependency between input features and output values. To handle non-linearity {\bf HSIC Lasso} was proposed recently \citep{yamada2014high}. In HSIC Lasso, with specific choice of kernel functions, non-redundant features with strong statistical dependence on output values can be found in terms of HSIC by simply solving a Lasso problem. Although, empirical evidence~\citep{yamada2014high} shows that HSIC Lasso outperforms most existing feature selection methods, in general HSIC Lasso tends to be expensive compared to simple Lasso when the number of samples increases. Moreover, statistical properties of HSIC Lasso is not well studied. 

{\bf Sparse Additive Models (SpAM)} are useful for high-dimensional feature selection problems~\citep{ravikumar2009sparse,NIPS2008_0329, raskutti2012minimax,suzuki2012fast} and	can be efficiently solved by the \emph{back-fitting} algorithms~\citep{ravikumar2009sparse} resulting in globally optimal solutions. Also, statistical properties of the SpAM estimator are well studied~\citep{ravikumar2009sparse}, and SpAM is closely related to multiple kernel learning (MKL) methods \citep{NIPS2008_0171}.  However, a potential weakness of SpAM is that it can only deal with additive models and may not work well for non-additive models. Another weakness of SpAM is that it needs to optimize $nd$ variables and hence tends to be computationally expensive.

\begin{figure*}[t!]
\begin{center}
\begin{minipage}[t]{0.45\linewidth}
\centering
  {\includegraphics[width=0.99\textwidth]{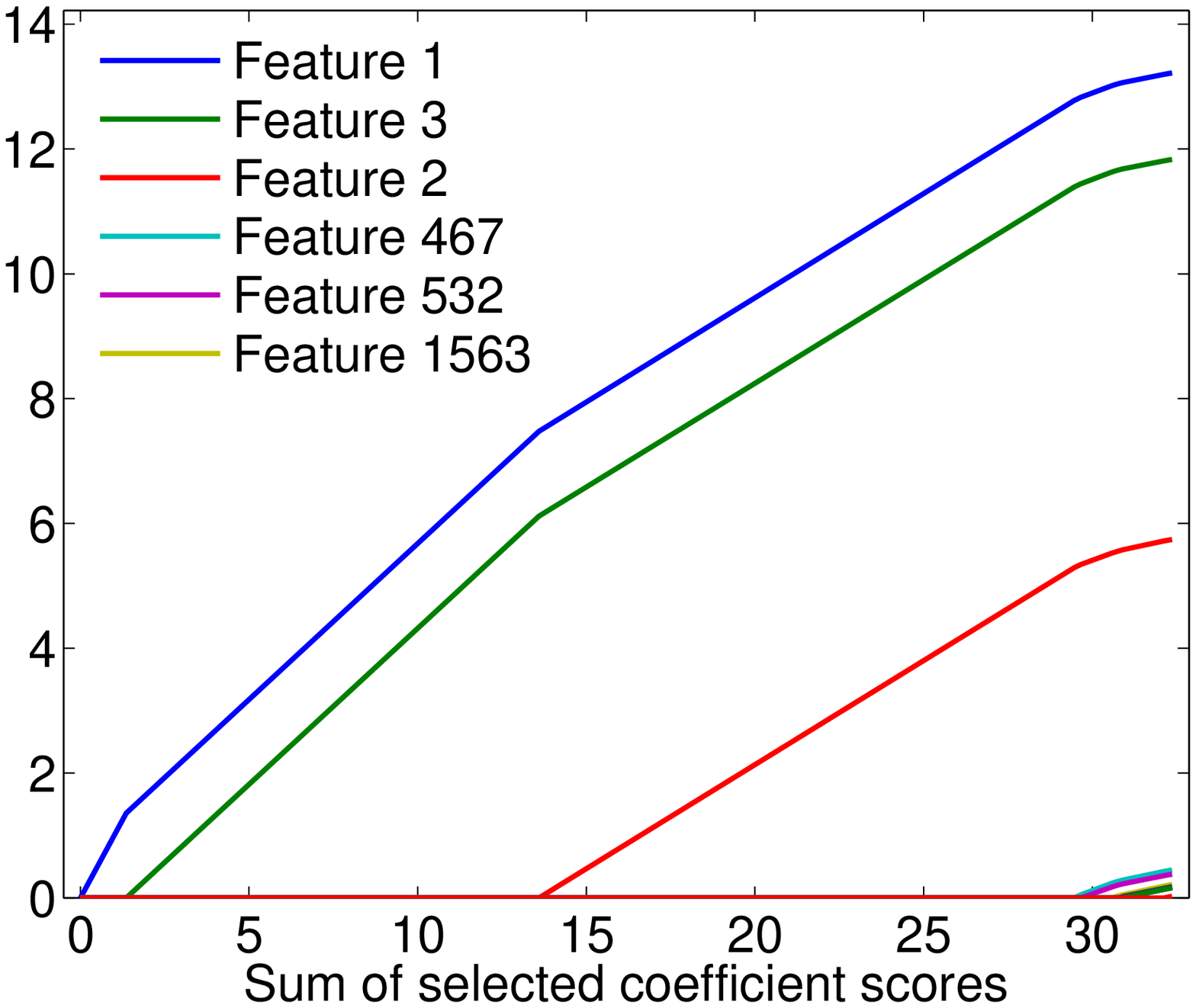}} 
  (a) Regularization path of $\text{N}^3$LARS.\\ \vspace{-0.10cm}
\end{minipage}
\begin{minipage}[t]{0.49\linewidth}
\centering
  {\includegraphics[width=0.99\textwidth]{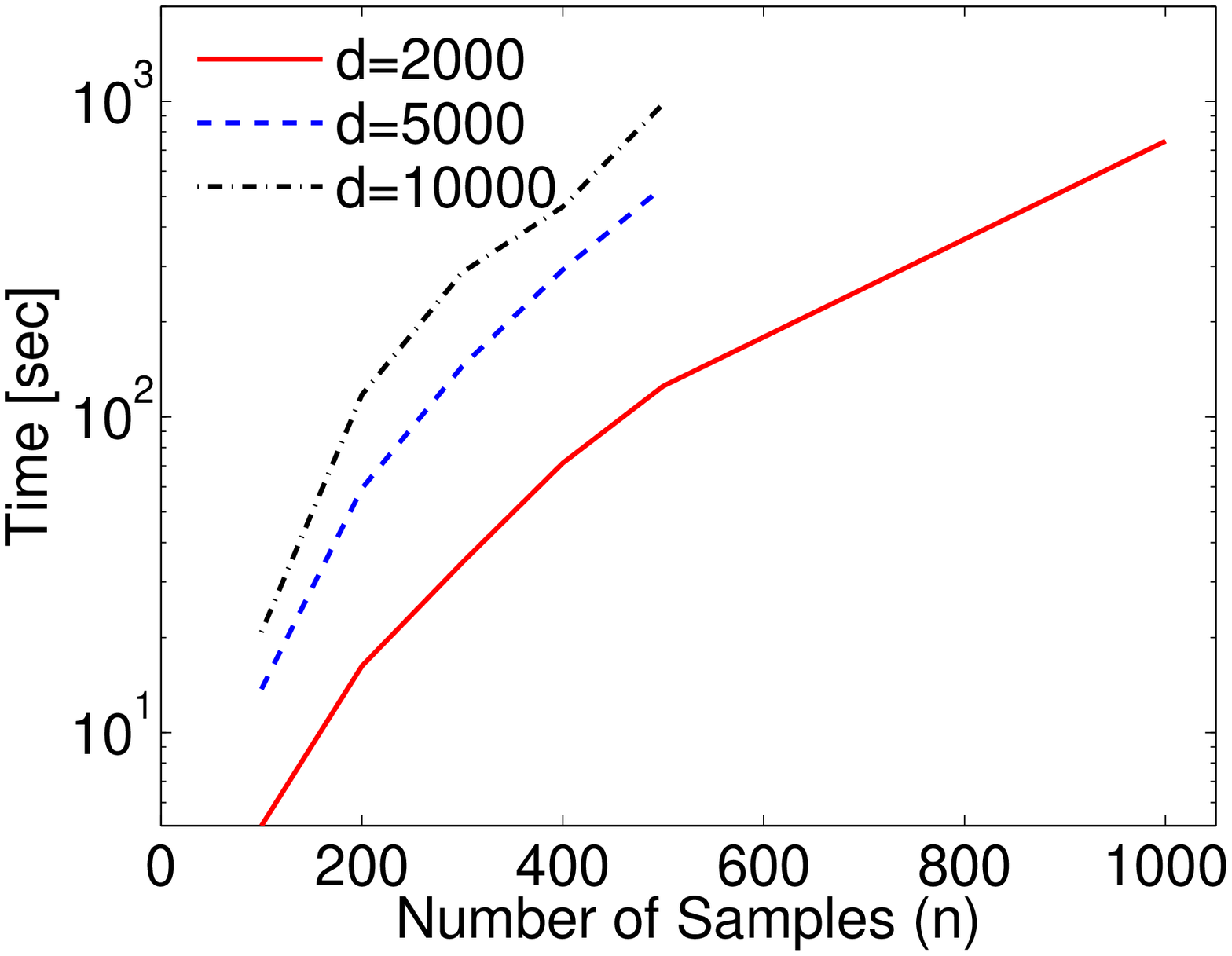}} 
  (b) Computational time. \\ \vspace{-0.10cm}
\end{minipage}
\caption{Result of synthetic data. (a): The regularization path for the synthetic data. (b): Computational time of $\text{N}^3$LARS (without using distributed computing and Nyst\"{o}m approximation) with respect to the dimensionality and the number of samples.} 
    \label{fig:illustrative_example}
\end{center}
\end{figure*}

\section{Experiments}
\label{sec:experiments}
In this section, we evaluate $\text{N}^3$LARS. First we present proof-of-concept experiments on synthetic data.  Next, we perform two-fold evaluations on real-world dataset: (a) accuracy results on small scale datasets, and (b) scalability results on a large-scale biology dataset.

\subsection{Setup}
We compare $\text{N}^3$LARS with the following baselines: (a) mRMR \citep{PAMI:Peng+etal:2005}, (b) QPFS (MI)\citep{JMLR:Rodriguez+etal:2010}, (c) QPFS (HSIC) \citep{DBLP:journals/jmlr/CortesMR12}, (d) forward selection with HSIC (FOHSIC), (e) HSIC Lasso \citep{yamada2014high}, and HITON-PC \citep{aliferis2010local}. Since FOHSIC compares favorably with HSFS and is more computationally efficient than HSFS, we only use FOHSIC in this paper for comparison. For QPFS and mRMR, we use a C++ implementation of the mutual information estimator\footnote{\url{http://penglab.janelia.org/proj/mRMR/}}. Then, the QP solver SeDuMi\footnote{\url{http://sedumi.ie.lehigh.edu/}} is used to solve QPFS. For HSIC Lasso, we used the publicly available Matlab code\footnote{\url{http://www.makotoyamada-ml.com/hsiclasso.html}}. For all experiments involving QPFS (MI) and QPFS (HSIC), we set $\gamma = 0.5$. For large sample sized high-dimensional biology data, we only compare with NHSIC based maximum relevance feature selection (MR-NHSIC) and mRMR as other methods are extremely slow on large datasets. 

\subsection{Synthetic Dataset}
We consider a regression problem from an 2000 dimensional input, where output data is generated according to the following expression:
\[
Y = X_{1}*\exp(X_{2}) + X_{3} + 0.1*E,
\] 
where $(X_1, \ldots, X_{1000})^\top \sim N(\boldzero_{1000},\boldI_{1000})$, $X_{1001} = X_1 + 0.01*E, \ldots, X_{2000} = X_{1000} + 0.01*E$, and $E \sim N(0,1)$. Note, $X_4, \ldots, X_{1000}$ and $X_{1004}, \ldots, X_{2000}$ are irrelevant features to the output, and $X_{1001}$, $X_{1002}$, and $X_{1003}$ are redundant features of $X_1$, $X_2,$, and $X_3$. Here, $N(\boldmu,\boldSigma)$ denotes the multi-variate Gaussian distribution with mean $\boldmu$ and covariance matrix $\boldSigma$. 

Figure~\ref{fig:illustrative_example}(a) shows the regularization path for 10 features, and this illustrates that the proposed method can select non-redundant features. Figure~\ref{fig:illustrative_example}(b) shows the computational time for $\text{N}^3$LARS (without using distributed computing and Nyst\"{o}m approximation) on a Xeon 2.4GHz (16 cores) with 24GB memory. As can be seen, the computational cost of $\text{N}^3$LARS dramatically increases when the number of samples grows. Moreover, since $\text{N}^3$LARS needs $O(dn^2)$ memory space, it is not possible to solve $\text{N}^3$LARS even if the number of samples is 1000. Thus, the Nyst\"{o}m approximation and distributed computation are necessary for the proposed method to solve high-dimensional and large sample cases. In Section \ref{sec:p53}, we show that $\text{N}^3$LARS with distributed computation can extract 100 features from $d = 5000$ and $n = 26120$ data in a few hours.

\subsection{Accuracy results (for large $d$, small $n$)}
First we qualitatively compare our proposed method on high-dimensional dataset (e.g., $d > 10^4$ and $n < 500$). As most baselines are computationally slow, we restrict this comparative study to small scale datasets. In next section, we show scalability comparison for a subset of the baselines on a larger dataset.

\vspace{.1in}
\noindent {\bf Classification:}
We use the ASU feature selection datasets\footnote{\url{http://featureselection.asu.edu/datasets.php}} for evaluating the performance of the proposed method in a high-dimensional feature selection setting (See Table~\ref{tab:feat_data} for details). 

\begin{table}[t]
\centering
\caption{Summary of Real-world Datasets.}
\label{tab:feat_data}
\small
\begin{tabular}{|c|l|c|c|}
\hline
Type & Dataset & Features ($d$) & Samples ($n$)  \\ \hline
 & AR10P    & 2400    & 130  \\
& PIE10P   & 2400    & 210  \\
Small \& High& PIX10P   & 10000   & 100   \\
(Classification)& ORL10P   & 10000   & 100   \\ 
 & TOX      & 5748    & 171  \\
& CLL-SUB  & 11340   & 111   \\  \hline
Small \& High & \multirow{2}{*}{TRIM32} & \multirow{2}{*}{31098} & \multirow{2}{*}{120} \\ 
(Regression)& & & \\\hline
Large \& High & \multirow{2}{*}{p53} & \multirow{2}{*}{5408} & \multirow{2}{*}{31420} \\ 
(Classification)& & & \\\hline
\end{tabular}
\end{table}

For this classification experiment, we use 80$\%$ of samples for training and the rest for testing. We run the classification experiments 100 times by randomly selecting training and test samples and report the average classification accuracy.  Since all datasets are multi-class, we use multi-class kernel logistic regression (KLR) \citep{book:Hastie+Tibshirani+Friedman:2001,yamada2010semi}. For KLR we use Gaussian kernel where the kernel width and the regularization parameter are chosen based on a 3-fold cross-validation. For all experiments, we first select 50 features by feature selection methods on training data and then use top $m = 10,20,\ldots,50$ features having the largest absolute regression coefficients. 

To check whether $\text{N}^3$LARS can successfully select non-redundant features, we use  \emph{redundancy rate} (RED) \citep{AAAI:Zheng+etal:2010}, $\textnormal{RED} = \frac{1}{m(m-1)} \sum_{\boldu_k, \boldu_j, k > l} |\rho_{k,l}|,$ 
 where $\rho_{k,l}$ is the correlation coefficient between the $k$-th and $l$-th features. A large RED score means that selected features are more correlated to each other, that is, many redundant features are selected. 

\begin{figure*}[t!]
\begin{center}
\begin{minipage}[t]{0.325\linewidth}
\centering
  {\includegraphics[width=0.99\textwidth]{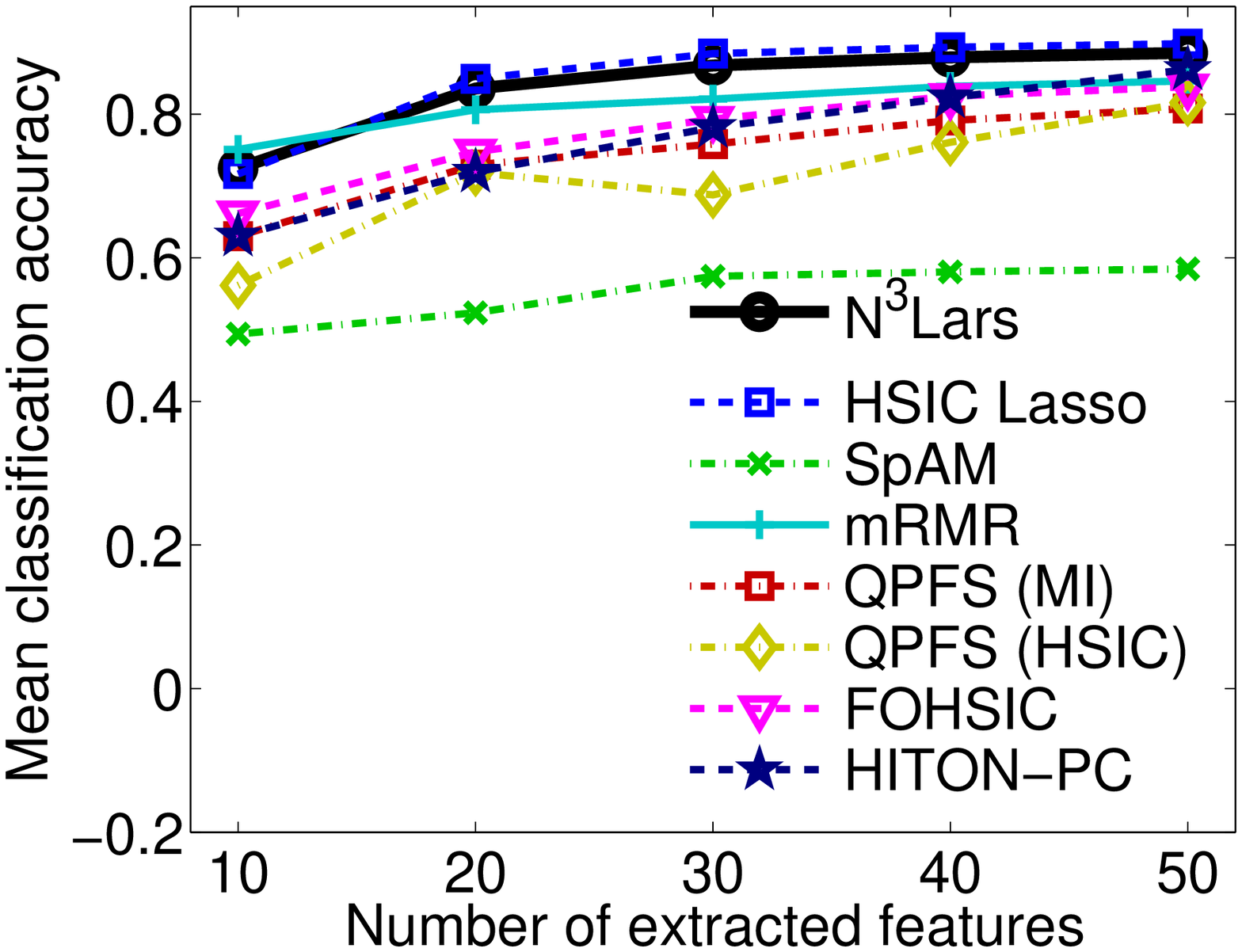}} \\ \vspace{-0.10cm}
(a) AR10P
\end{minipage}
\begin{minipage}[t]{0.325\linewidth}
\centering
  {\includegraphics[width=0.99\textwidth]{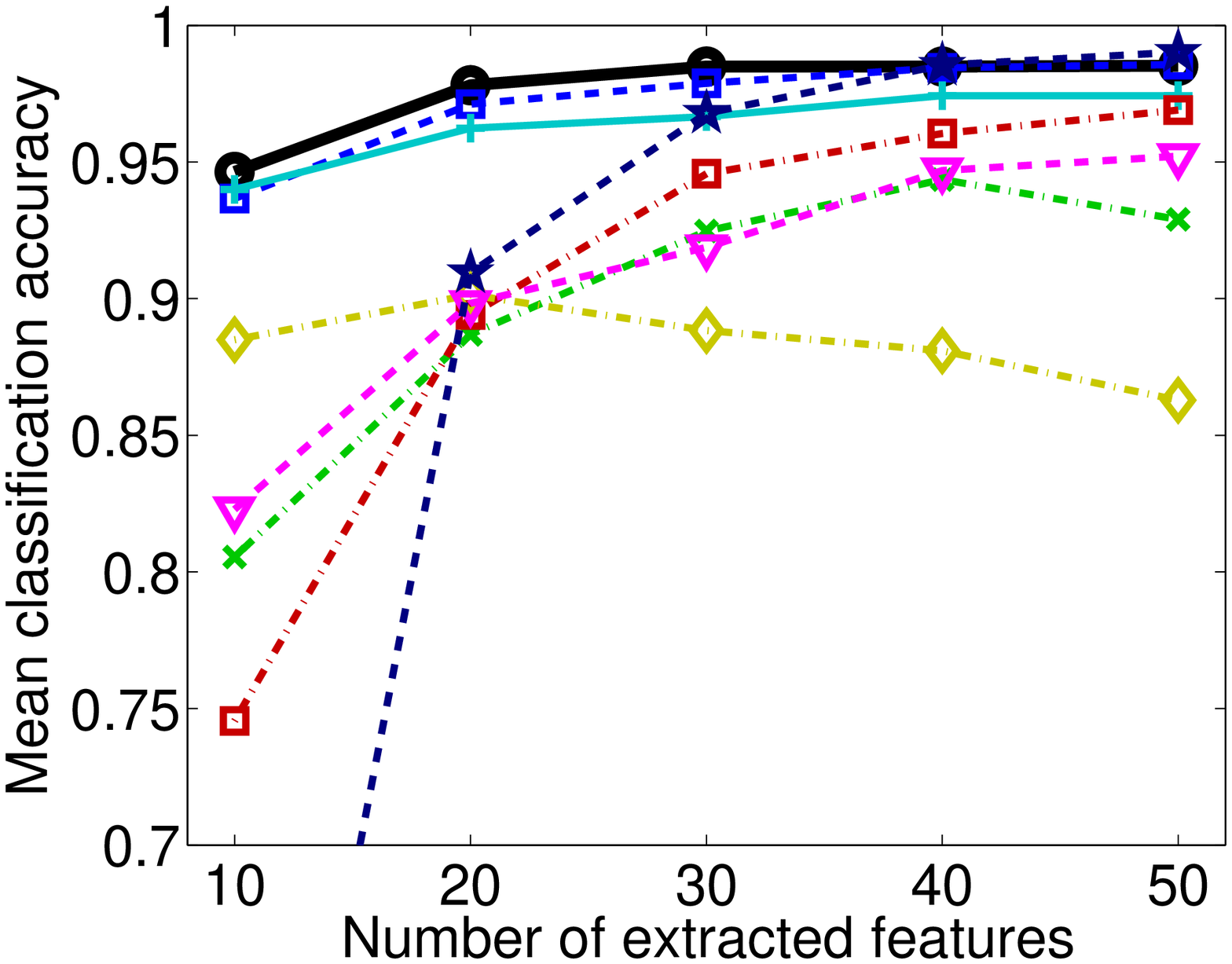}} \\ \vspace{-0.10cm}
(b) PIE10P
\end{minipage}
  \begin{minipage}[t]{0.325\linewidth}
\centering
{\includegraphics[width=0.99\textwidth]{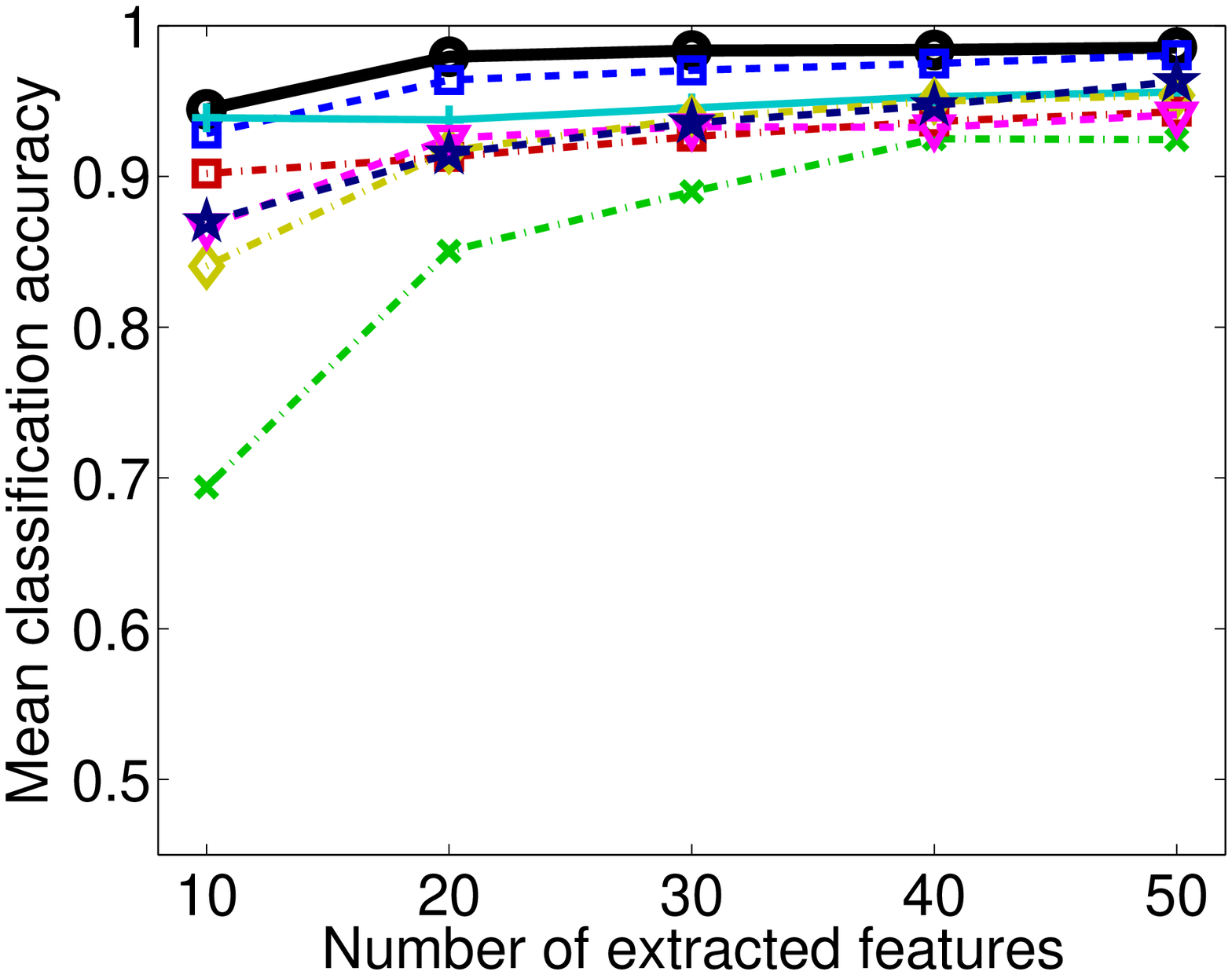}} \\  \vspace{-0.10cm}
(c) PIX10P
  \end{minipage} \\
\begin{minipage}[t]{0.325\linewidth}
\centering
  {\includegraphics[width=0.99\textwidth]{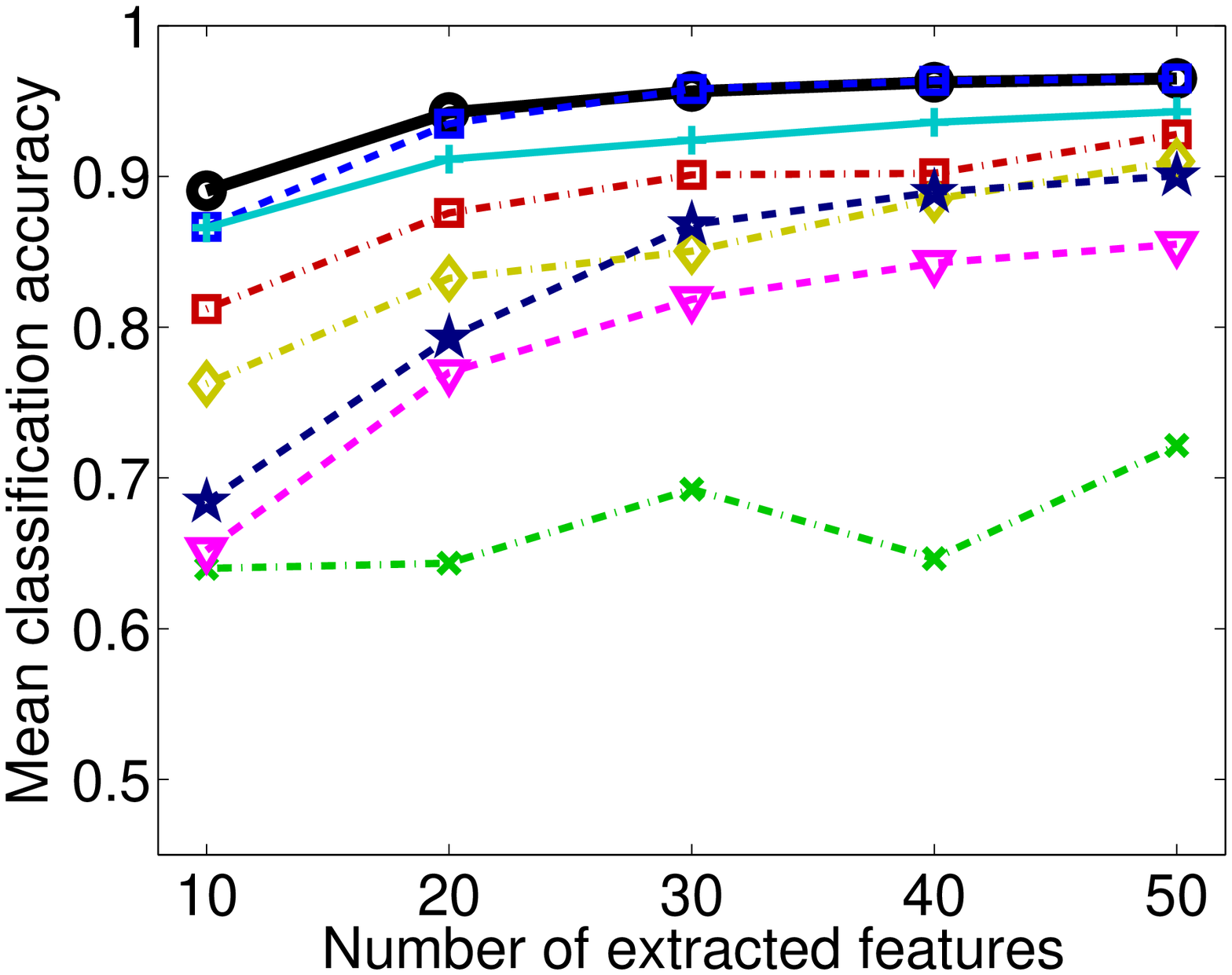}} \\ \vspace{-0.10cm}
(d) ORL10P
\end{minipage}
\begin{minipage}[t]{0.325\linewidth}
\centering
  {\includegraphics[width=0.99\textwidth]{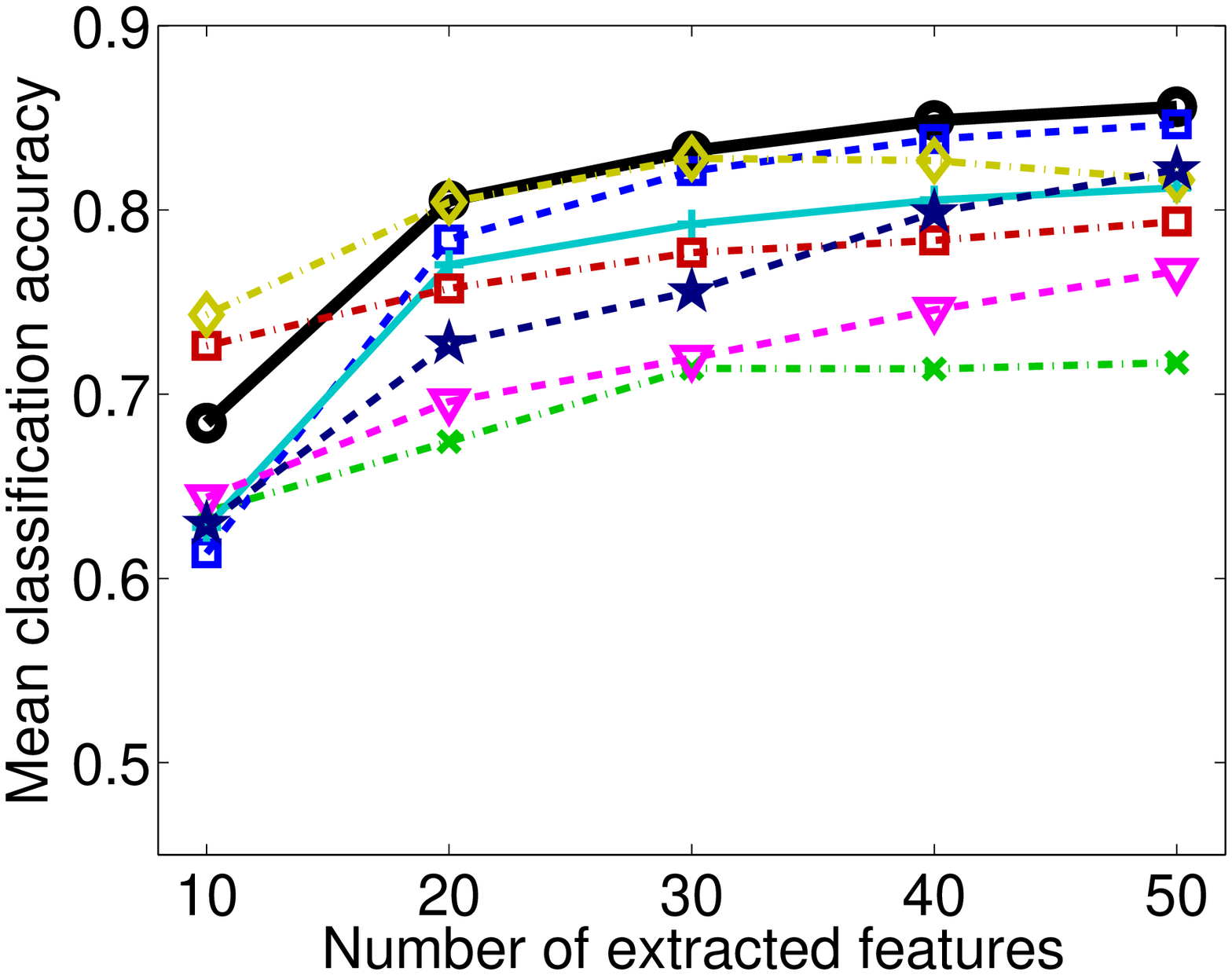}} \\ \vspace{-0.10cm}
(e) TOX
\end{minipage}
  \begin{minipage}[t]{0.325\linewidth}
\centering
{\includegraphics[width=0.99\textwidth]{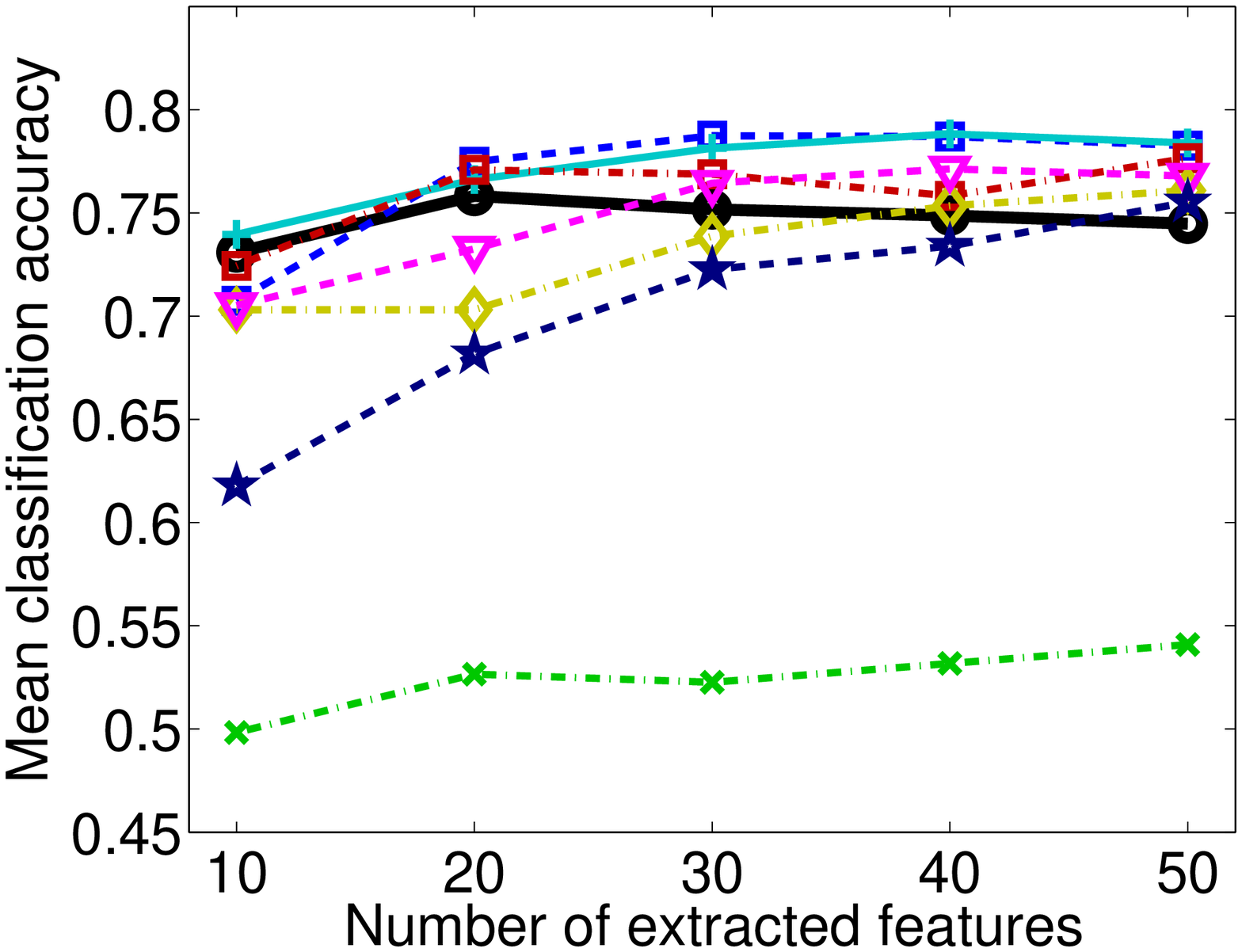}} \\  \vspace{-0.10cm}
(f) CLL-SUB
  \end{minipage}
 \caption{Mean classification accuracy for real-world data. The horizontal axis denotes the number of selected features, and the vertical axis denotes the mean classification accuracy.} 
    \label{fig:result_ASU}
\end{center}
\end{figure*}

\begin{table*}[t]
\centering
\caption{Mean Redundancy Rate (with Standard Deviations in
Brackets) for Real-world Data. }
\label{table:redundancy_rate}
\begin{tabular}{c l@{\ }r@{\ }|l@{\ }r@{\ }|l@{\ }r@{\ }|l@{\ }r@{\ }|l@{\ }r@{\ }|l@{\ }r@{\ }|l@{\ }r@{\ }}
\hline
\multicolumn{1}{c|}{\multirow{2}{*}{Dataset}}  & \multicolumn{2}{c|}{\multirow{2}{*}{$\text{N}^3$LARS}}  & \multicolumn{2}{c|}{{HSIC}} &  \multicolumn{2}{c|}{\multirow{2}{*}{SpAM}}  &   \multicolumn{2}{c|}{\multirow{2}{*}{mRMR}}   & \multicolumn{2}{c|}{{QPFS}} & \multicolumn{2}{c|}{\multirow{2}{*}{FOHSIC}} & \multicolumn{2}{c}{\multirow{2}{*}{HITON-PC}}\\
\multicolumn{1}{c|}{}  & \multicolumn{2}{c|}{} & \multicolumn{2}{c|}{{Lasso}}  &  \multicolumn{2}{c|}{} & \multicolumn{2}{c|}{}  & \multicolumn{2}{c|}{(HSIC)} & \multicolumn{2}{c|}{} & \multicolumn{2}{c}{}\\
\hline  
\multicolumn{1}{l|}{AR10P}     & {\bf .154} & (.016) &  .196  & (.028)    & .255 & (.036) & .268 & (.038) &  .235 & (.034) & .350 & (.091) & .201 & (.035)\\
\multicolumn{1}{l|}{PIE10P}     & {\bf .124} & (.010) &  .135  & (.014)  & .250 & (.042)  & .225 & (.036)  & .155 & (.021) & .285 & (.059)  & .174 & (.029)\\
\multicolumn{1}{l|}{PIX10P}    & {\bf .171} & (.020) &  .177 & (.023)   &  .388 & (.105)  & .200 & (.066) & .198 & (.036) & .348 & (.064) & .210 & (.037)\\
\multicolumn{1}{l|}{ORL10P}   & {\bf .182} & (.022) & .192  & (.026)   & .300 & (.047)  & .294 & (.095)  & .191 & (.034) & .225 & (.045)  & .173 & (.021)\\
\multicolumn{1}{l|}{TOX}        &  .397 & (.029) & .382        & (.027)    & .391 & (.028)  & .386 & (.032) & {\bf .371} & (.040) & .396 & (.036) & .419 & (.032) \\
\multicolumn{1}{l|}{CLL-SUB}  & .349 & (.039)& .344        & (.034)    & .403 & (.058)  & .328 & (.039) & {\bf .281} & (.050) & .352 & (.061) & .364 & (.036)\\
 \hline
\end{tabular}
\end{table*}

Figure~\ref{fig:result_ASU} shows the average classification accuracy over 100 runs, where $x$-axis is the number of selected features.  As can be observed, the proposed method compares favorably with the HSIC Lasso, a state-of-the-art high-dimensional feature selection method. Table~\ref{table:redundancy_rate} shows the averaged RED values over top $m=10, 20, \ldots, 50$ features selected by each feature selection method.  The RED score of $\text{N}^3$LARS tends to be smaller than those of existing methods. Overall, the proposed method can select non-redundant features. 

\vspace{.1in}
\noindent {\bf Regression:}
Next, we evaluate our proposed method using the Affymetric GeneChip Rat Genome 230 2.0 Array data set \citep{scheetz2006regulation}. In this dataset, there are 120 rat subjects with 31098 genes (i.e, $n=120,~d=31098$),  which were measured from eye tissue. In this paper, we focus on finding genes that are related to the TRIM32 gene \citep{scheetz2006regulation,huang2010variable}, which was recently found to cause the Bardet-Biedl syndrome and takes real values. 

For this regression experiment, we use 80$\%$ of samples for training and the rest for testing. As earlier, we run the regression experiments 100 times by randomly selecting training and test samples, and compute the average mean squared error (MSE). We employ  kernel regression (KR) \citep{book:Schoelkopf+Smola:2002} with the Gaussian kernel for evaluating the mean squared error. Similar to the multi-class classification problem, we first choose 50 features using feature selection methods on training data and then use top $m = 10,20,\ldots,50$ features having the largest absolute regression coefficients. In KR, the Gaussian width and the regularization parameter are chosen based on 3-fold cross-validation.  Note, in this experiments, most existing methods are too slow to finish. Thus, we only included the $\text{N}^3$LARS, HSIC Lasso, linear Lasso, and  mRMR results.

Figure~\ref{fig:result_TRIM32} shows the mean squared error over 100 runs as a function of the number of selected features. As can be observed, the proposed method performs as good as HSIC Lasso and mRMR. 
  we use HSIC as a dependency measure instead of NHSIC, we can obtain the HSIC Lasso solution. In this regression experiment, HSIC measure performs better than NHSIC. 

\begin{figure}[t!]
\begin{center}
\begin{minipage}[t]{0.5\linewidth}
\centering
  {\includegraphics[width=0.99\textwidth]{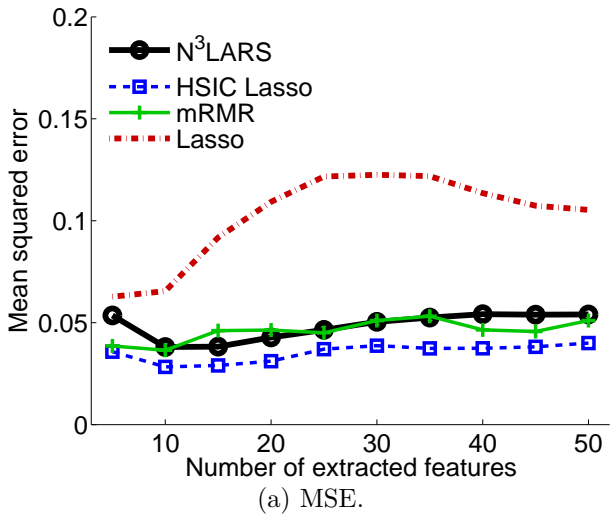}} \\ \vspace{-0.10cm}
(a) MSE.
\end{minipage}
 \caption{(a): Mean squared error for the TRIM32 data. The horizontal axis denotes the number of selected features, and the vertical axis denotes the mean squared error.  The average redundancy rate of the proposed method, HSIC Lasso, mRMR, and Lasso are 0.42, 0.45, 0.39, and 0.43, respectively. }
    \label{fig:result_TRIM32}
\end{center}
\vspace{-0.1in}
\end{figure}

\begin{figure}[t!]
\begin{center}
\begin{minipage}[t]{0.48\linewidth}
\centering
  {\includegraphics[width=0.99\textwidth]{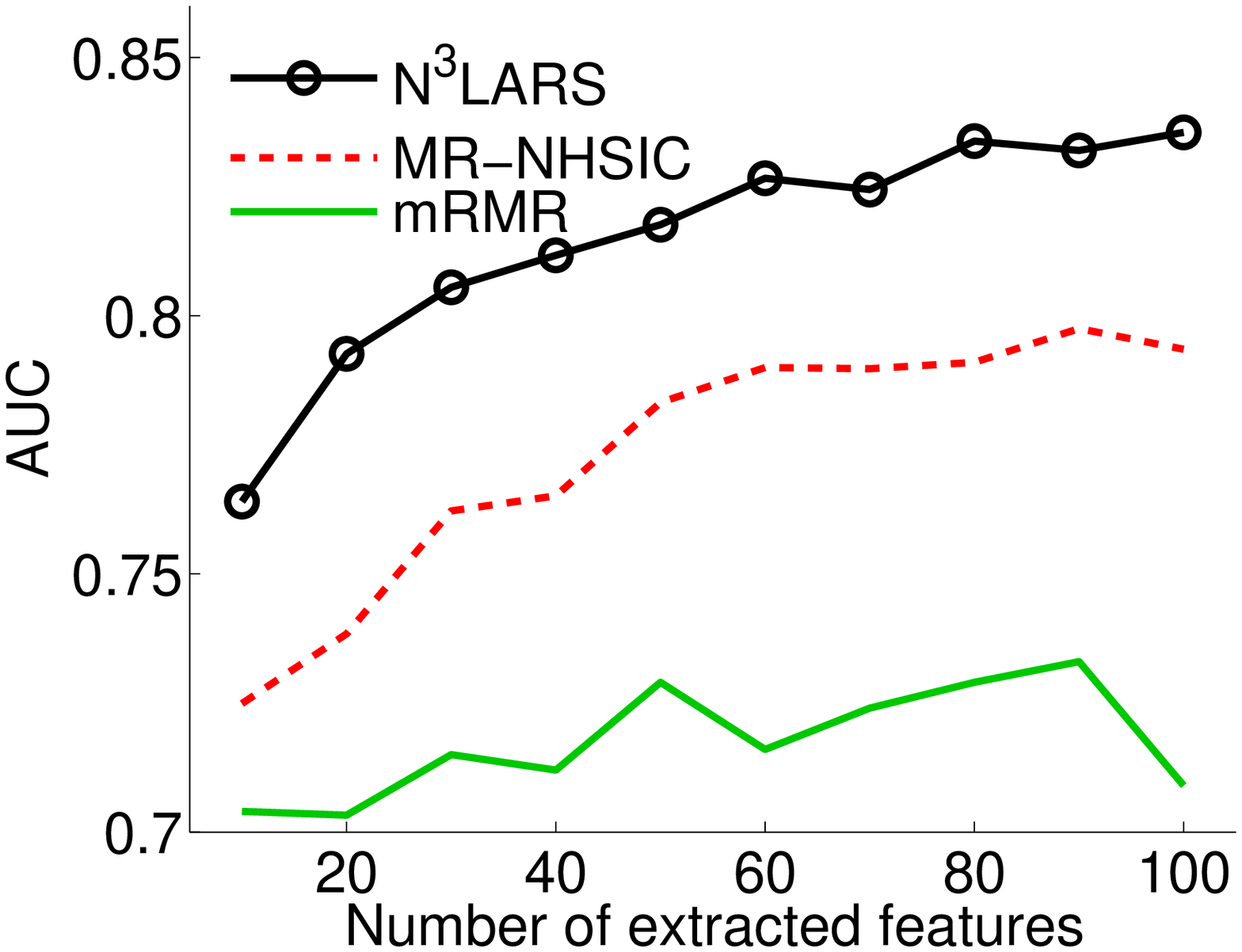}} (a) 
\end{minipage}
\begin{minipage}[t]{0.48\linewidth}
\centering
  {\includegraphics[width=0.99\textwidth]{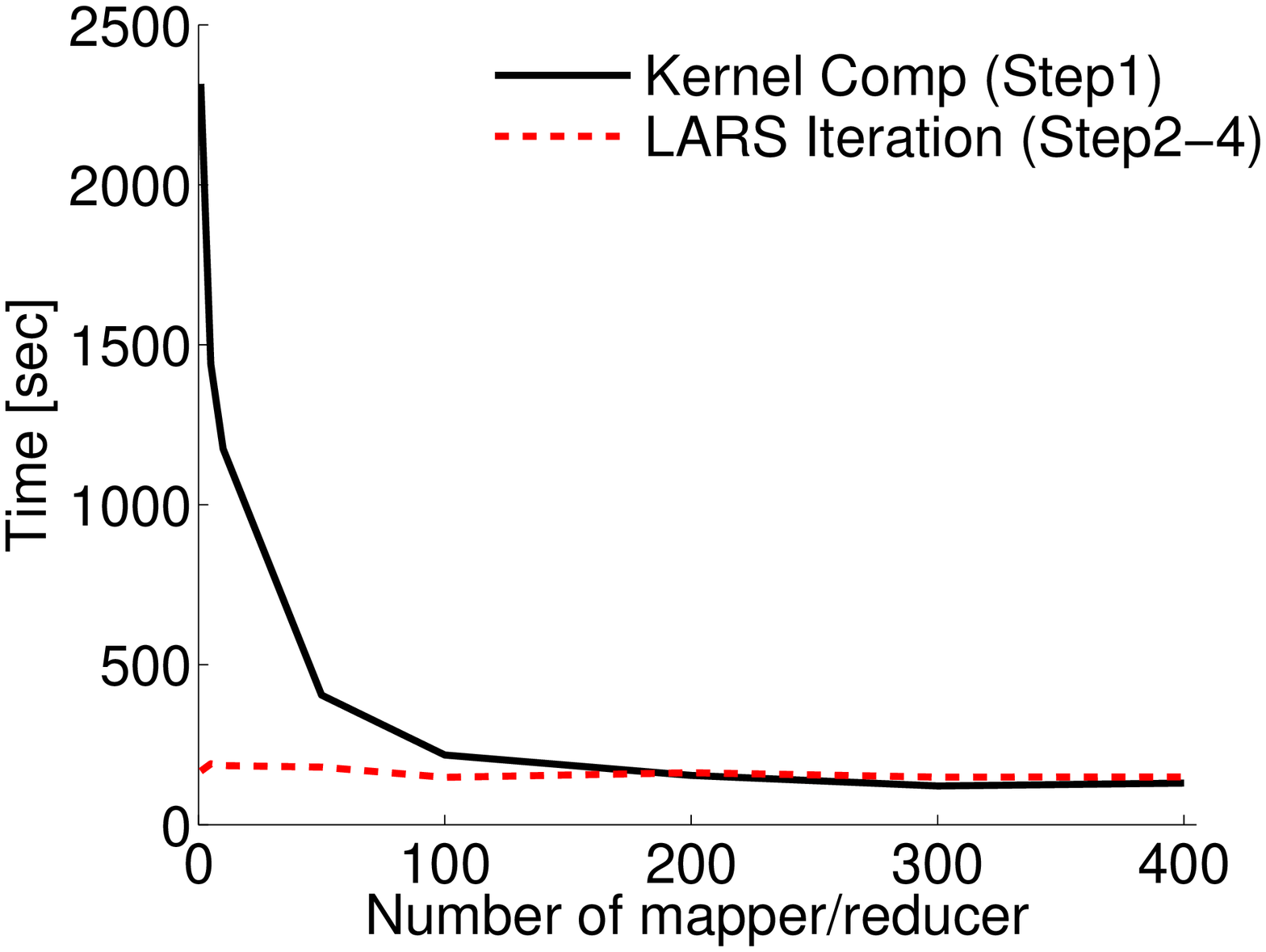}} (b) 
\end{minipage}
\caption{Results for a large and high-dimensional biology data. Comparable methods according to the paired \emph{t-test} at the significance level 5\% are specified by '$\circ$'. (a): The AUC score. (b): Computation time.} 
    \label{fig:lars_p53}
\end{center}
\vspace{-0.2in}
\end{figure}

\subsection{Scalability results (for large $d$, large $n$)}
\label{sec:p53}
In this section, we evaluate the proposed method for a large and high-dimensional setting. We use the p53 mutant data set \citep{danziger2009predicting}, which consists of 5408 features and 31420 samples. In the context of existing non-linear feature selection methods, p53 dataset is quite big and hence has been used to demonstrate scalability results in our paper. The goal here is to predict p53 transcriptional activities (i.e., active or inactive) from the data, where all class labels are determined via in vivo assays \citep{danziger2009predicting}. Note that, the dataset is dense, i.e., most of features take non-zero values.  
For this experiment, we use 26120 samples for training and 5000 samples for test.  We run the classification experiments 5 times by randomly selecting training and test samples, and report the average area under the ROC curve (AUC) score. In this experiment, we use the gradient boosting decision tree (GBDT)~\citep{friedman2001greedy} as the classifier, and use 100 trees with 20 nodes. We first selected 100 features by feature selection methods and then use top $m = 10,20,\ldots,100$ features having the largest absolute regression coefficients. Since the data is large, we compare $\text{N}^3$LARS with NHSIC based maximum relevance (MR-NHSIC) and mRMR\footnote{The mRMR package uses sub-sampling technique to handle large-scale data.}. 

Figure \ref{fig:lars_p53}(a) shows that the AUC scores for $\text{N}^3$LARS, MR-NHSIC, and mRMR, respectively. It is interesting that the simple  MR-NHSIC outperforms mRMR in this experiment, and this indicates that the mutual information in mRMR is not accurately estimated. Overall, the proposed method outperforms the existing methods. Figure \ref{fig:lars_p53}(b)  shows the computational time for $\text{N}^3$LARS with respect to the number of mappers/reducers, where we changed the number of mappers/reducers as 1, 5, 10, 50, 100, 200, 300, and 400. The solid black line indicates the computation time for $\boldF_k$ and $\boldG$ (Step1). The dotted line is the computation time for each LARS iteration (Step2 - 4). As can be seen, the computational time for the Step1  dramatically decreases as the number of mappers/reducers increases. 

\section{Conclusion}
\label{sec:conclusion}
In this paper, we proposed a novel non-linear feature selection method called the \emph{Nonlinear Non-Negative least angle regressions} ($\text{N}^3$LARS). 
Our proposed method can efficiently obtain a globally  optimal solution by exploiting a non-negative variant of the LARS algorithm where the similarity between input and output is measured through normalized HSIC (NHSIC). Furthermore, we proposed a distributed computation framework for $\text{N}^3$LARS that solves a large and high-dimensional feature selection problem in reasonable time, and experimental results demonstrated that $\text{N}^3$LARS is promising.



\bibliography{main}
\bibliographystyle{natbib}
\end{document}